\documentclass[letterpaper,10pt,conference]{ieeeconf}
\IEEEoverridecommandlockouts %
\usepackage{xcolor}
\usepackage{graphicx}
\graphicspath{{figs/}}

\usepackage{amsmath}
\usepackage{amssymb}
\usepackage{amsfonts}
\usepackage{amstext}

\usepackage{amsthm}
\usepackage{bm}
\usepackage{mathtools}
\allowdisplaybreaks

\usepackage{booktabs}

\theoremstyle{definition}
\newtheorem{definition}{Definition}
\newtheorem{theorem}{Theorem}

\usepackage[
  backend=biber,
  hyperref=true,
  sorting=none,
  style=ieee-caps]
{biblatex}

\usepackage[noabbrev]{cleveref}
\crefname{equation}{}{}

\usepackage{xparse}

\let\originalleft\left
\let\originalright\right
\renewcommand{\left}{\mathopen{}\mathclose\bgroup\originalleft}
\renewcommand{\right}{\aftergroup\egroup\originalright}

\NewDocumentCommand\Real{}{ \mathbb{R} }

\NewDocumentCommand\bbm{}{ \begin{bmatrix} } 
\NewDocumentCommand\ebm{}{ \end{bmatrix} }   
\NewDocumentCommand\T{}{\mathsf{T}}          

\NewDocumentCommand\Vector{m}{ \boldsymbol{\mathbf{#1}} }

\NewDocumentCommand\Matrix{m}{ \bm{\mathbf{#1}} }

\NewDocumentCommand\Norm{m}{ \left\Vert#1\right\Vert }

\NewDocumentCommand\Ker{m}{ \mathrm{ker}\left(#1\right) }

\NewDocumentCommand\Span{m}{ \mathrm{span}\left\{#1\right\} }

\NewDocumentCommand\Zero{}{ \Matrix{0} }
\NewDocumentCommand\Identity{}{ \Matrix{I} }

\NewDocumentCommand\LieGroupSO{m}{ \mathrm{SO}(#1) }
\NewDocumentCommand\LieAlgebraSO{m}{ \mathfrak{so}(#1) }

\NewDocumentCommand\LieGroupSGal{m}{ \mathrm{SGal}(#1) }
\NewDocumentCommand\LieAlgebraSGal{m}{ \mathfrak{sgal}(#1) }

\NewDocumentCommand\Matlog{m}{\mathrm{ln}\left(#1\right)}

\NewDocumentCommand\Matexp{m}{\exp\left(#1\right)}

\NewDocumentCommand\Orb{m}{\mathrm{Orb}\left(#1\right)}

\NewDocumentCommand\Defined{}{ \triangleq }

\usepackage{scalerel}

\NewDocumentCommand\<{}{\mspace{1mu}}

\newcommand{\uw}{\Vector{u}^{\wedge}}

\newcommand{\rms}[1]{\mathrm{#1}}

\NewDocumentCommand\StateVector{}{ \Vector{\mathcal{X}} }

\NewDocumentCommand\SmallStateVector{}{\scaleobj{0.85}{\StateVector}}

\title{\Large\bf Lost in Time? Continuous Symmetry and Identifiability in\\ Aided Inertial Navigation with Unknown Measurement Delays}

\author{Jonathan Kelly$^{1}$, Phone Thiha Kyaw$^{1}$, and Matthew Giamou$^{2}$
\thanks{$^{1}$Jonathan Kelly and Phone Thiha Kyaw are with the Space \& Terrestrial Autonomous Robotic Systems (STARS) Laboratory at the University of Toronto Institute for Aerospace Studies (UTIAS), Toronto, Canada. {\tt\footnotesize <first\_name>.<last\_name>@robotics.utias.utoronto.ca}}
\thanks{$^{2}$Matthew Giamou is with the Autonomous Robotics \& Convex Optimization (ARCO) Laboratory in the Department of Computing and Software, McMaster University, Hamilton, Canada. {\tt\footnotesize giamoum@mcmaster.ca}}}

\begin{document}
\maketitle

\begin{abstract}
In many multisensor systems, measurements from different sensors are subject to unknown relative time delays.
Accurate state estimation requires that delays be accounted for and, when possible, calibrated online.
We consider the case of aided inertial navigation, where measurements from a single aiding sensor are subject to an unknown but constant delay relative to the inertial measurement data stream, and study the identifiability of the resulting system.
Critically, identifiability depends not only on the temporal structure of the measurements, but also on the shape of the vehicle trajectory: some trajectories are sufficiently informative to support unique recovery of the delay and the navigation state, while others are not.
Using the special Galilean Lie group, we characterize a broad family of uninformative trajectories, each generated by a constant element of the Galilean Lie algebra.
We show that, along any such trajectory, the delayed measurement model admits a continuous symmetry that prevents unique recovery of the delay and the navigation state.
We connect this symmetry-based characterization to the familiar linearized, Jacobian-based analysis.
Although our development is motivated by aided navigation, the underlying ideas apply more generally to estimation problems on Lie groups with delayed measurements.
\end{abstract}

\section{Introduction}
\label{sec:introduction}

An aided inertial navigation system (INS) fuses inertial measurements with data from additional sensors to estimate the state of a moving vehicle, typically including its position, velocity, and orientation.
The inertial measurement unit (IMU) supplies high-rate angular velocity and specific-force readings that drive the process model; lower-rate aiding sensors, such as GNSS receivers or cameras, provide observations that correct drift to maintain bounded error over longer time horizons.

In practice, measurements from different sensors are subject to relative time offsets, or delays.
These delays arise from sensing, processing, and communication latency, and are often modelled as constant but unknown~\cite{2014_Kelly_Determining,2014_Li_Online,2018_Qin_Online}.
For optimal navigation performance, delays must be properly accounted for within the state estimator.
Failure to do so can lead to biased state estimates, inconsistent covariance estimates, and, in the worst case, divergence of the navigation solution.
The estimation problem for a single aiding sensor is therefore to recover the unknown delay, the navigation state, and possibly other parameters, from the IMU and delayed aiding sensor measurements.

One approach to online delay-and-state recovery augments the estimator state with the delay parameter and applies a standard nonlinear filtering algorithm, such as the extended Kalman filter~\cite{2010_Nilsson_Joint,2014_Li_Online,2021_Goudar_Online}.
This method is attractive because it fits easily within existing recursive estimation frameworks.
However, recovery is possible only if the augmented system is \emph{identifiable} (or observable, in the usual state-estimation sense), meaning that the delay and navigation state must be uniquely determined by the measurement history.
We previously~\cite{2021_Kelly_Question} showed that recursive filters suffer from structural issues that lead to biased and inconsistent estimates.
The present work addresses the question of whether the required information is present in the measurements at all, and explains the main issue with recursive updates: the system is unidentifiable from any single measurement because, as we show, the measurement model admits a symmetry under which distinct delay--state pairs are indistinguishable.

\begin{figure}[t!]
\centering
\vspace*{-4mm}
\includegraphics[width=0.985\columnwidth,trim={91.5px, 215px, 96px, 170px},clip]%
{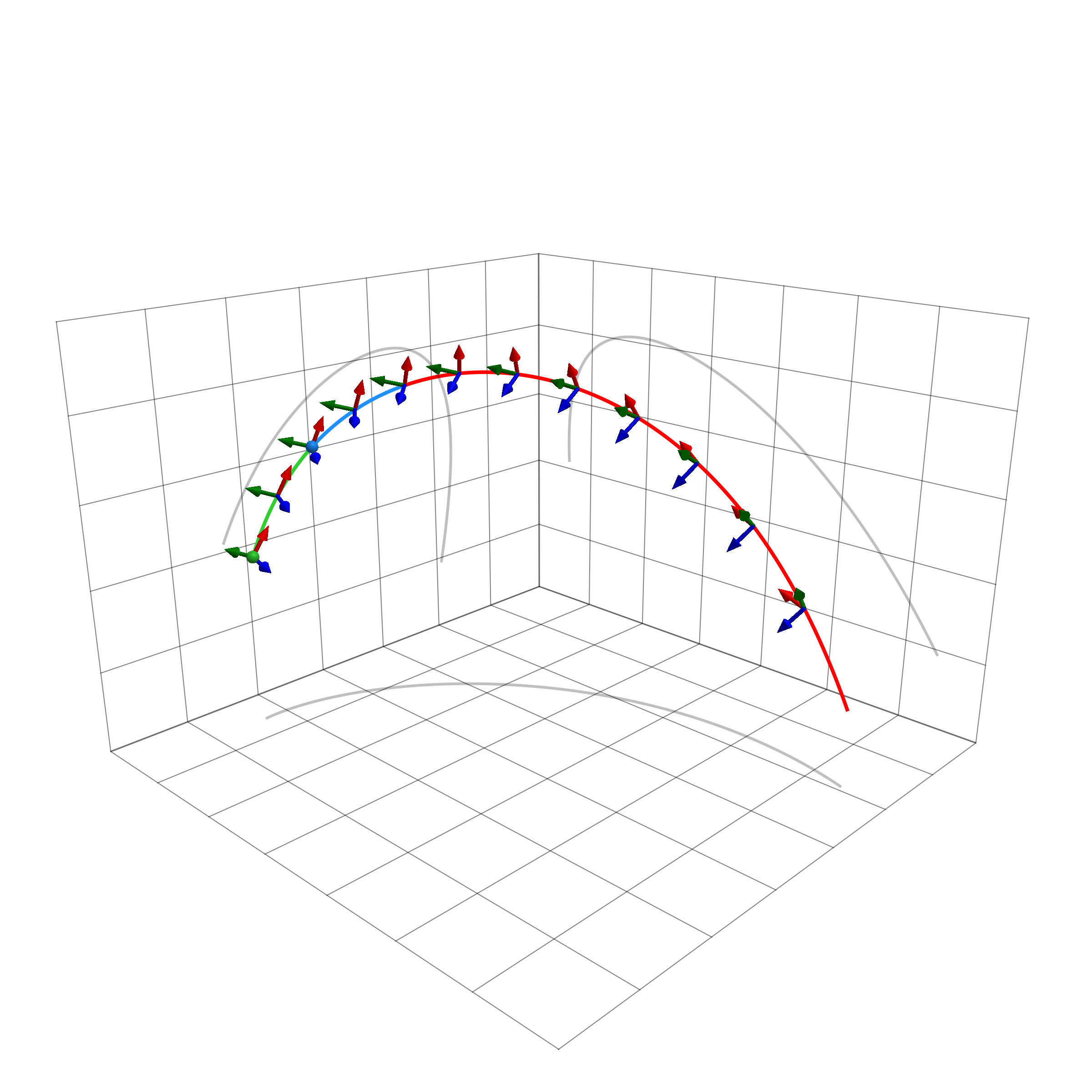}
\vspace*{-2mm}
\caption{An example uninformative trajectory for aided inertial navigation with delayed aiding measurements.
The green and blue spheres mark two distinct navigation states, both at the initial time $t = 0$, each paired with a different aiding-measurement delay; the corresponding trajectories are shown in green and blue, respectively. 
Both evolve under the same constant element of the Lie algebra $\LieAlgebraSGal{3}$ and trace the same curve under gravity.
Aiding measurements are available only along the red segment, and are consistent with either pairing, so the measurement history fails to uniquely determine the unknown delay and initial navigation state.
Body-frame axes are plotted as triads along the trajectory.
The grey curves are projections of the trajectory onto the coordinate planes and show its overall shape.}
\label{fig:trajectory_teaser}
\vspace*{-4.5mm}
\end{figure}

In this paper, we study identifiability in aided inertial navigation under measurement delays, and ask when the vehicle motion itself prevents unique recovery of the delay and the navigation state.
We call motions for which unique recovery fails \emph{uninformative trajectories}.
Along uninformative trajectories, joint online estimation may become ill-conditioned, with consequences ranging from reduced accuracy to estimator instability. %

Our analysis relies on the special Galilean group $\LieGroupSGal{3}$.
This $10$-dimensional Lie group is a natural state space for inertial motion, combining position, velocity, orientation, and time in a single geometric structure.
Its Lie algebra contains the space of IMU-driven inputs, that is, angular velocity and specific force.
This structure allows delay identifiability to be examined directly in terms of the vehicle trajectory and the delayed measurement history.

The central contribution of the paper is a symmetry-based characterization of identifiability in aided inertial navigation.
We show that the family of uninformative trajectories is determined by a continuous symmetry, induced by Galilean transformations, that preserves the measurement history.
Because the symmetry is geometric, the resulting characterization gives a clearer picture of why identifiability fails than a linearized, Jacobian-based analysis alone.
Although our presentation is motivated by aided inertial navigation, the same ideas apply more broadly to kinematic systems in which delayed measurements are used to correct a propagated state.

The remainder of the paper is organized as follows.
\Cref{sec:background} reviews related work, \Cref{sec:identifiability_symmetry} introduces identifiability and continuous symmetry concepts, and \Cref{sec:SGal3_group} develops the special Galilean group.
\Cref{sec:aided_identifiability} formulates delayed aided inertial navigation on $\LieGroupSGal{3}$ and analyzes local identifiability, while \Cref{sec:uninformative_trajectories} presents our characterization of uninformative trajectories.
Finally, \Cref{sec:conclusion} summarizes and suggests directions for future work.

\section{Background and Related Work}
\label{sec:background}

Structural identifiability asks whether unknown parameters can be recovered from noiseless output data~\cite{1970_Bellman_Structural,1976_Grewal_Identifiability}.
Observability is closely related, but concerns a time-varying state rather than fixed parameters and, for nonlinear delay-free systems, is assessed by rank tests involving Lie derivatives or observability Gramians~\cite{1977_Hermann_Nonlinear,2021_Grebe_Observability-Aware}.
The two viewpoints coincide in our setting: as we explain in \Cref{sec:aided_identifiability}, retaining the IMU input history makes the trajectory a function of fixed unknowns, namely the initial navigation state, the delay, and any additional parameters such as biases.%
\footnote{We refer to the initial navigation state as the \emph{initial condition} in the sequel, to match the notation of \Cref{eqn:general_delayed_system}.}
Filters, smoothers, and sliding-window optimizers all use this parameterization implicitly. We use the language of identifiability throughout.

Our approach is to determine when the measurement model admits a nontrivial continuous symmetry: a transformation of the unknown parameters that leaves the outputs unchanged.
Martinelli~\cite{2011_Martinelli_State,2020_Martinelli_Observability} developed this idea for delay-free systems using Lie derivatives, leading to symmetries characterized by local differential conditions.
In our case, the symmetries act on the entire measurement history, and whether they are trivial depends on the shape of the trajectory over time.
Khosravian et al.~\cite{2016_Khosravian_State} designed observers for invariant systems on Lie groups with delayed output measurements, but assumed the delay to be known. 
Here we ask whether an unknown delay can be determined.
Yang et al.~\cite{2019_Yang_Degenerate,2023_Yang_Online} identified specific uninformative trajectories for monocular camera--IMU systems through a Jacobian-based analysis.\footnote{Such trajectories are sometimes called \emph{degenerate} in the literature~\cite{2019_Yang_Degenerate,2023_Yang_Online}. We use the term uninformative to emphasize that the issue is insufficient excitation, rather than any defect in the trajectory itself.}
We consider aiding sensors that provide delayed global navigation measurements, which leads to a different family of trajectories, as described in \Cref{sec:uninformative_trajectories}.

Although we do not examine estimator performance here, uninformative trajectories are of practical importance.
In a related observability setting, Huang~\cite{2017_Huang_Towards} showed that apparent information may be gained along unobservable (unidentifiable) directions in the state (parameter) space, leading to estimator inconsistency and possibly to divergence.

\section{Identifiability and Symmetry}
\label{sec:identifiability_symmetry}

In this section, we introduce the identifiability framework used throughout the paper and connect it to symmetry.
Consider a nonlinear system of the form
\begin{equation}
\label{eqn:general_delayed_system}
\begin{gathered}
\dot{\Matrix{X}}(t)
=
f\bigl(
\Matrix{X}(t), \Vector{u}(t), \Vector{\theta}
\bigr),
\quad
\Matrix{X}(0) = \Matrix{X}_0 \in \mathcal{M}, \\[1mm]
\Matrix{Y}(t)
=
h\bigl(\Matrix{X}(t - \tau),\,\tau\bigr),
\quad
t \geq \tau,
\end{gathered}
\end{equation}
where $\mathcal{M}$ is an $m$-dimensional smooth manifold, $\Matrix{X}(t) \in \mathcal{M}$ is the state, $\Vector{u}(t) \in \mathcal{U} \subseteq \Real^p$ is a known control input, and $\Vector{\theta} \in \Theta \subseteq \Real^q$ is a vector of additional unknown parameters.
The map $f : \mathcal{M} \times \mathcal{U} \times \Theta \to T\mathcal{M}$ is a smooth vector field, with $f(\Matrix{X},\Vector{u},\Vector{\theta}) \in T_{\Matrix{X}}\mathcal{M}$.
The output $\Matrix{Y}(t) \in \mathcal{N}$ lies on an $\ell$-dimensional smooth manifold $\mathcal{N}$, the map $h:\mathcal{M} \times \Real_{> 0}\to\mathcal{N}$ is smooth, and $\tau \in (0, \tau_{\rms{u}})$ is an unknown constant delay, where $\tau_{\rms{u}}$ is a known upper bound.

We collect all the unknown parameters into a tuple,
\begin{equation*}
\StateVector
\Defined
\left(\Matrix{X}_0,\,\tau,\,\Vector{\theta}\right)
\in \mathcal{P},
\end{equation*}
where $\Matrix{X}_0 = \Matrix{X}(0)$ is the initial condition and $\mathcal{P}$ is the admissible parameter space.
We use $\SmallStateVector \in \Real^d$ for a local-coordinate vector corresponding to $\StateVector$, with $d = \dim(\mathcal{P})$.

We assume that the trajectory is defined for all $t \geq 0$, but that measurements are available only over a finite observation interval.
For identifiability, output histories must be compared over an interval that is valid for every candidate delay.
Together with the choice of observation interval below, the bound $\tau_{\rms{u}}$ ensures that the delayed state is well defined for every candidate delay, not only for the true delay $\tau$.

\subsection{Distinguishability, Identifiability, and Observability}
\label{subsec:identifiability}

We take the standard approach to determining identifiability based on output distinguishability~\cite{1970_Bellman_Structural,1976_Grewal_Identifiability}.
The definitions below are adapted to our delayed-measurement setting.
As is standard in structural identifiability analysis, we treat the system in \Cref{eqn:general_delayed_system} as noise-free.

Let $[t_{\rms{s}},\,t_{\rms{s}} + T]$ be the observation interval, with $t_{\rms{s}} \geq \tau_{\rms{u}}$.
For a known input history sufficient to propagate the state from the initial time to $t_{\rms{s}} + T$, each parameter tuple $\StateVector \in \mathcal{P}$ determines an output history over $[t_{\rms{s}},\, t_{\rms{s}} + T]$.
Changing the delay $\tau$ shifts which points along the trajectory enter the measurement model, making delay identifiability a trajectory-dependent question.

\begin{definition}[Indistinguishability]
Let $\mathcal{T} \subseteq [t_{\rms{s}},\, t_{\rms{s}} + T]$ be a set of observation times.
The parameter tuples $\StateVector^\circ, \StateVector^\star \in \mathcal{P}$ are said to be \emph{indistinguishable} on $\mathcal{T}$ if, for the same known input history, they produce identical outputs at every time in $\mathcal{T}$.
Otherwise, the parameter tuples are said to be \emph{distinguishable} on $\mathcal{T}$.
\end{definition}

Note that the corresponding delayed states may occur at different times along the trajectory, since the delays in $\StateVector^\circ$ and $\StateVector^\star$ need not be equal.

\begin{definition}[Identifiability]
A parameter tuple $\StateVector^\star \in \mathcal{P}$ is \emph{identifiable} on a set of observation times $\mathcal{T} \subseteq [t_{\rms{s}},\, t_{\rms{s}} + T]$ if it is distinguishable on $\mathcal{T}$ from every $\StateVector \in \mathcal{P}$ with $\StateVector \neq \StateVector^\star$.
\end{definition}

The preceding definition is global: it requires uniqueness over the entire admissible parameter space $\mathcal{P}$.
For nonlinear systems, this can be difficult to establish and is often stronger than necessary.
We therefore use a local notion, which requires uniqueness only within a neighbourhood of the parameters under consideration.

\begin{definition}[Local identifiability]
A parameter tuple $\StateVector^\star \in \mathcal{P}$ is \emph{locally identifiable} on a set of observation times $\mathcal{T} \subseteq [t_{\rms{s}},\, t_{\rms{s}} + T]$ if there exists a neighbourhood $\mathcal{V}$ of $\StateVector^\star$ such that $\StateVector^\star$ is distinguishable on $\mathcal{T}$ from every $\StateVector \in \mathcal{V}$ with $\StateVector \neq \StateVector^\star$.
Equivalently, any $\StateVector \in \mathcal{V}$ that produces the same outputs as $\StateVector^\star$ at every time in $\mathcal{T}$ must satisfy $\StateVector = \StateVector^\star$.
\end{definition}

Local identifiability can be assessed by linearizing the measurement model with respect to the unknown parameters.
Each measurement contributes a Jacobian block, and these blocks are stacked to form the complete measurement Jacobian.
Full column rank of the stacked Jacobian implies that the linearized measurements locally constrain the unknown parameters to first order, while rank deficiency indicates first-order perturbation directions that remain unconstrained.

\subsection{Continuous Symmetries}
\label{subsec:symmetries}

We now state the symmetry principle used later in the paper.
Our interest is in a continuous family of transformations of the unknown parameters that leaves the delayed output history unchanged over the observation interval.

\begin{theorem}[Symmetry implies unidentifiability]
\label{thm:symmetry_unidentifiable_continuous}
Let $\StateVector^\star \in \mathcal{P}$.
Suppose there exist $\epsilon > 0$ and a one-parameter family of transformations $\mathcal{S}_\alpha : \mathcal{P} \to \mathcal{P}$, $\alpha \in (-\epsilon, \epsilon)$, smooth in $\alpha$, such that $\mathcal{S}_0(\StateVector)=\StateVector$ for all $\StateVector$ in a neighbourhood of $\StateVector^\star$.
Suppose further that, for every $\alpha \in (-\epsilon, \epsilon)$, the parameter tuples $\StateVector^\star$ and $\mathcal{S}_\alpha(\StateVector^\star)$ are indistinguishable on $[t_{\rms{s}},\, t_{\rms{s}} + T]$.
If $\mathcal{S}_\alpha(\StateVector^\star) \neq \StateVector^\star$ for all $\alpha \in (-\epsilon, \epsilon) \setminus \{0\}$, then $\StateVector^\star$ is not locally identifiable on $[t_{\rms{s}},\, t_{\rms{s}} + T]$.
\end{theorem}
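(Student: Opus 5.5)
The plan is a direct argument by contradiction against the definition of local identifiability. Suppose $\StateVector^\star$ were locally identifiable on $[t_{\rms{s}},\, t_{\rms{s}} + T]$. Then there is a neighbourhood $\mathcal{V}$ of $\StateVector^\star$ such that every $\StateVector \in \mathcal{V}$ producing the same outputs as $\StateVector^\star$ on the whole interval must equal $\StateVector^\star$. The goal is to exhibit a point of $\mathcal{V}$ that contradicts this, taken from the curve $\alpha \mapsto \mathcal{S}_\alpha(\StateVector^\star)$.

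The key step is continuity of that curve at $\alpha = 0$. Since $\mathcal{S}_\alpha$ is smooth in $\alpha$, the map $c(\alpha) \Defined \mathcal{S}_\alpha(\StateVector^\star)$ is in particular continuous from $(-\epsilon,\epsilon)$ into $\mathcal{P}$. It satisfies $c(0) = \mathcal{S}_0(\StateVector^\star) = \StateVector^\star$, because $\mathcal{S}_0$ is the identity near $\StateVector^\star$.

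Continuity then gives $\delta \in (0,\epsilon)$ with $c(\alpha) \in \mathcal{V}$ for all $|\alpha| < \delta$. Working in the local coordinates $\SmallStateVector \in \Real^d$ around $\StateVector^\star$ makes this concrete if one wants to be explicit about the topology on $\mathcal{P}$.

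Now fix any $\alpha_1$ with $0 < |\alpha_1| < \delta$. By hypothesis, $c(\alpha_1)$ and $\StateVector^\star$ are indistinguishable on $[t_{\rms{s}},\, t_{\rms{s}} + T]$. By the non-triviality assumption, $c(\alpha_1) \neq \StateVector^\star$. This contradicts the defining property of $\mathcal{V}$. Since $\mathcal{V}$ was an arbitrary neighbourhood, no such neighbourhood exists, and $\StateVector^\star$ is not locally identifiable.

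There is no substantial obstacle. The only point needing care is the interpretation of "smooth in $\alpha$": it must at least yield continuity of $\alpha \mapsto \mathcal{S}_\alpha(\StateVector^\star)$ in the topology of $\mathcal{P}$, which is the manifold topology induced by the chart giving $\SmallStateVector$. I would state this explicitly.

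I would also remark that smoothness is stronger than needed; continuity suffices for the argument. Smoothness becomes relevant later, because differentiating at $\alpha=0$ gives a tangent direction $\frac{d}{d\alpha}\mathcal{S}_\alpha(\StateVector^\star)|_{\alpha=0}$. If this direction is nonzero, it lies in the kernel of the stacked measurement Jacobian, which is the link to the rank test mentioned earlier.
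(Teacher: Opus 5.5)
Your proposal is correct and follows the same argument as the paper: continuity of $\alpha \mapsto \mathcal{S}_\alpha(\StateVector^\star)$ at $\alpha = 0$ places nontrivial indistinguishable points $\mathcal{S}_\alpha(\StateVector^\star) \neq \StateVector^\star$ in every neighbourhood of $\StateVector^\star$, which rules out local identifiability. Your explicit contradiction against a fixed neighbourhood $\mathcal{V}$, and your remark that continuity alone suffices, are just a more careful version of the paper's one-line proof.
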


\begin{proof}
By continuity, $\mathcal{S}_\alpha(\StateVector^\star) \to \StateVector^\star$ as $\alpha \to 0$, so every neighbourhood of $\StateVector^\star$ contains some $\mathcal{S}_\alpha(\StateVector^\star) \neq \StateVector^\star$ that is indistinguishable from $\StateVector^\star$ on $[t_{\rms{s}},\, t_{\rms{s}} + T]$.
Hence the parameter-to-output map is not locally injective at $\StateVector^\star$, and $\StateVector^\star$ is not locally identifiable.
\end{proof}

In typical aided navigation systems, however, measurements are available only at discrete times.
The same argument applies directly to this case: if a nontrivial smooth family
$\mathcal{S}_\alpha$ leaves the outputs unchanged at the measurement times
$t_1, \dots, t_n$, then the sampled parameter-to-output map is not locally
injective.
Thus, $\StateVector^\star$ is not locally identifiable from the discrete
measurements.

The symmetries $\{\mathcal{S}_\alpha\}$ considered here form a smooth one-parameter family acting on the parameter space, with $\mathcal{S}_0$ the identity and $\mathcal{S}_\alpha \circ \mathcal{S}_\beta = \mathcal{S}_{\alpha+\beta}$ whenever both sides are defined.
As the parameter $\alpha$ varies, the point $\mathcal{S}_\alpha(\StateVector^\star)$ moves through parameter space while leaving the outputs unchanged, and the resulting set is the symmetry orbit through $\StateVector^\star$.

\begin{definition}[Symmetry orbit]
\label{def:symmetry_orbit}
The \emph{symmetry orbit} of $\StateVector^\star \in \mathcal{P}$ under the symmetry family $\{\mathcal{S}_\alpha\}$ is
\begin{equation*}
\Orb{\StateVector^\star}
=
\bigl\{\,\mathcal{S}_\alpha(\StateVector^\star) :
\alpha \in (-\epsilon,\epsilon)\,\bigr\}.
\end{equation*}
\end{definition}

Every point of $\Orb{\StateVector^\star}$ produces the same outputs as $\StateVector^\star$ over the observation interval, or at the specified measurement times in the discrete case.
If every neighbourhood of $\StateVector^\star$ contains points of $\Orb{\StateVector^\star}$ other than $\StateVector^\star$ itself, then $\StateVector^\star$ is not locally identifiable, by \Cref{thm:symmetry_unidentifiable_continuous}.

\section{The Special Galilean Group}
\label{sec:SGal3_group}

The special Galilean group $\LieGroupSGal{3}$ is a 10-dimensional Lie group that describes transformations between inertial reference frames in relative motion~\cite{2023_Kelly_Galilean,2025_Mahony_Galilean}.
In matrix form,
\begin{equation*}
\LieGroupSGal{3} \Defined
\left\{
\Matrix{X} =
\bbm
\Matrix{C} & \Vector{v} & \Vector{r} \\
\Zero & 1 & \eta \\
\Zero & 0 & 1
\ebm
\in \Real^{5 \times 5}
\ \middle\vert \
\Matrix{C} \in \LieGroupSO{3}
\right\},
\end{equation*}
where $\Matrix{C}$ is a rotation matrix, $\Vector{v} \in \Real^3$ is a velocity boost, $\Vector{r} \in \Real^3$ is a spatial translation, and $\eta \in \Real$ is a time translation.
The group operation is matrix multiplication.

A group element acts on spacetime coordinates $(\Vector{p}, t)$, with $\Vector{p} \in \Real^3$, according to
\begin{equation*}
(\Vector{p},t)
\mapsto
(\Matrix{C}\<\Vector{p} + \Vector{v}\<t + \Vector{r},\,t + \eta).
\end{equation*}
In this sense, $\LieGroupSGal{3}$ packages the kinematics of inertial motion into a single matrix group.

\subsection{The Lie Algebra $\LieAlgebraSGal{3}$}
\label{subsec:SGal3_lie_algebra}

The Lie algebra $\LieAlgebraSGal{3}$ is the tangent space of $\LieGroupSGal{3}$ at the identity.
In matrix form,
\begin{equation*}
\LieAlgebraSGal{3}
\Defined
\left\{
\Matrix{\Xi} =
\bbm
\Vector{\phi}^{\wedge} & \Vector{\nu} & \Vector{\rho} \\
\Zero & 0 & \iota \\
\Zero & 0 & 0
\ebm
\ \middle\vert \
\Vector{\phi},\, \Vector{\nu},\, \Vector{\rho} \in \Real^3,\,
\iota \in \Real
\right\},
\end{equation*}
where $\Vector{\phi}$ parameterizes the rotational part, $\Vector{\nu}$ the boost, $\Vector{\rho}$ the spatial translation, and $\iota$ the temporal translation.
Here, $(\cdot)^{\wedge}$ applied to $\Vector{\phi} \in \Real^3$ produces a skew-symmetric matrix, giving the standard isomorphism between $\Real^3$ and $\LieAlgebraSO{3}$.
The same notation is overloaded for $\LieAlgebraSGal{3}$ via the map $(\cdot)^{\wedge} : \Real^{10} \to \LieAlgebraSGal{3}$ defined as
\begin{equation}
\label{eqn:sgal3_wedge}
\Vector{\xi}^{\wedge}
=
\bbm
\Vector{\rho} \\
\Vector{\nu} \\
\Vector{\phi} \\
\iota
\ebm^{\wedge}
=
\bbm
\Vector{\phi}^{\wedge} & \Vector{\nu} & \Vector{\rho} \\
\Zero & 0 & \iota \\
\Zero & 0 & 0
\ebm.
\end{equation}
The corresponding operator $(\cdot)^{\vee}:\LieAlgebraSGal{3} \to \Real^{10}$ is the inverse of $(\cdot)^{\wedge}$, so that if $\Vector{\xi}^{\wedge} = \Matrix{\Xi}$, then $\Matrix{\Xi}^{\vee} = \Vector{\xi}$.

\subsection{The Exponential Map}
\label{subsec:SGal3_exp_map}

Let $\Vector{\xi} = \bbm \Vector{\rho}^\T\;\; \Vector{\nu}^\T\;\; \Vector{\phi}^\T\;\; \iota \ebm^\T \in \Real^{10}$.
The exponential map is defined by the matrix exponential and, for $\LieGroupSGal{3}$, has the closed form
\begin{equation}
\label{eqn:SGal3_exp_closed}
\Matexp{\Vector{\xi}^{\wedge}}
=
\bbm
\Matrix{C} & \Matrix{D}\<\Vector{\nu} & \Matrix{D}\<\Vector{\rho} + \Matrix{E}\<\Vector{\nu}\iota \\
\Zero & 1 & \iota \\
\Zero & 0 & 1
\ebm,
\end{equation}
where the three $3 \times 3$ matrices $\Matrix{C}$, $\Matrix{D}$, and $\Matrix{E}$ are
\begin{equation}
\label{eqn:C_D_E_matrices}
\begin{aligned}
\Matrix{C}
& =
\Identity_{3}
+
\sin(\phi)\,\uw
+
\bigl(1-\cos(\phi)\bigr)\left.\uw\right.^{\!2}, \\[1mm]
\Matrix{D}
& =
\Identity_{3}
+
\left(\frac{1-\cos(\phi)}{\phi}\right)\uw
+
\left(\frac{\phi-\sin(\phi)}{\phi}\right)\left.\uw\right.^{\!2}, \\
\Matrix{E}
& =
\frac{\Identity_{3}}{2}
+
\left(\frac{\phi-\sin(\phi)}{\phi^{2}}\right)\uw
+
\left(\frac{\phi^{2} + 2\cos(\phi)-2}{2\phi^{2}}\right)\left.\uw\right.^{\!2},
\end{aligned}
\end{equation}
and $\Vector{\phi} = \phi\<\Vector{u}$, with $\phi = \Norm{\Vector{\phi}}$, $\Vector{u}$ a unit vector, and $\uw$ the corresponding skew-symmetric matrix.
The expressions above are evaluated using their Taylor-series limits as $\phi \to 0$.

\section{Aided Navigation and Identifiability}
\label{sec:aided_identifiability}

In this section, we develop a model for aided inertial navigation with delayed aiding measurements, making use of the special Galilean group.
The process model is driven by IMU body-frame angular-rate and specific-force measurements.
We treat the IMU biases as constant\footnote{In reality, the IMU gyroscope and accelerometer biases vary slowly with time. We treat them as constant because the measurement windows we consider are comparatively short.} and also assume a known constant gravity vector.
We initially study local identifiability using the traditional approach: we linearize the delayed measurement model and analyze the resulting Jacobian.
The analysis reveals first-order perturbation directions that the delayed measurements leave unconstrained, but it does not by itself explain their geometric origin.
In \Cref{sec:uninformative_trajectories}, we show that these directions arise from a continuous symmetry of the delayed measurement model, giving a geometric characterization of uninformative trajectories.

\subsection{Navigation State and Kinematics}
\label{subsec:aided_system}

Consider a Galilean transformation between a moving body frame attached to the vehicle and a fixed inertial frame.
To keep the notation light, we omit explicit frame labels.
The navigation state on $\LieGroupSGal{3}$ is
\begin{equation}
\label{eqn:SGal3_state}
\Matrix{X}(t)
=
\bbm
\Matrix{C}(t) & \Vector{v}(t) & \Vector{r}(t) \\
\Zero & 1 & t \\
\Zero & 0 & 1
\ebm,
\end{equation}
where $\Matrix{C}(t) \in \LieGroupSO{3}$ is the orientation of the body frame relative to the inertial frame, and $\Vector{v}(t), \Vector{r}(t) \in \Real^3$ are the velocity and position expressed in the inertial frame.
The scalar $t$ is the time coordinate of the Galilean transformation, defined relative to the inertial-frame time origin.
As we show below, keeping this coordinate in the group element lets us express output delays as time translations on the group.

The unknown parameters are
\begin{equation*}
\StateVector
=
\left(
\Matrix{X}_0,\,
\tau,\,
\Vector{b}_{\omega},\,
\Vector{b}_{\rms{a}}
\right),
\end{equation*}
where $\Matrix{X}_0$ is the initial condition, $\tau$ is the unknown delay, and $\Vector{b}_{\omega},\Vector{b}_{\rms{a}} \in \Real^3$ are the gyroscope and accelerometer biases, respectively.
Without loss of generality, we choose the initial condition to lie in the \emph{isochronous} Galilean subgroup~\cite{2023_Kelly_Galilean}, consistent with the initialization at $t = 0$ in \Cref{eqn:general_delayed_system}.
This means that the time coordinate of $\Matrix{X}_0$ is zero.
Using this convention,
\begin{equation*}
\Matrix{X}_0
=
\bbm
\Matrix{C}_0 & \Vector{v}_0 & \Vector{r}_0 \\
\Zero & 1 & 0 \\
\Zero & 0 & 1
\ebm
\in \LieGroupSGal{3},
\end{equation*}
where $\Matrix{C}_0 = \Matrix{C}(0)$, $\Vector{v}_0 = \Vector{v}(0)$, and $\Vector{r}_0 = \Vector{r}(0)$ are unknown.

Define the bias-corrected angular rate and specific force,
\begin{equation*}
\Vector{\omega}(t) \Defined \Vector{\omega}_{\rms{m}}(t) - \Vector{b}_{\omega},
\quad
\Vector{s}(t) \Defined \Vector{a}_{\rms{m}}(t) - \Vector{b}_{\rms{a}},
\end{equation*}
where $\Vector{\omega}_{\rms{m}}(t)$ and $\Vector{a}_{\rms{m}}(t) \in \Real^3$ are the \emph{measured} body-frame angular rate and specific force, respectively.
The continuous-time kinematics are
\begin{align*}
\dot{\Matrix{C}}(t)
& =
\Matrix{C}(t)\<\Vector{\omega}(t)^{\wedge}, \\
\dot{\Vector{v}}(t)
& =
\Matrix{C}(t)\<\Vector{s}(t) + \Vector{g}, \\
\dot{\Vector{r}}(t)
& =
\Vector{v}(t),
\end{align*}
where $\Vector{g} \in \Real^3$ is the inertial-frame gravitational acceleration.
The remaining time coordinate satisfies $\dot{t} = 1$.

We decompose the trajectory into contributions from the initial condition, the integrated input history, and gravity.
Let $\breve{\Matrix{C}}(t)$, $\breve{\Vector{v}}(t)$, and $\breve{\Vector{r}}(t)$ satisfy
\begin{align*}
\dot{\breve{\Matrix{C}}}(t)
& =
\breve{\Matrix{C}}(t)\<\Vector{\omega}(t)^{\wedge}, \\
\dot{\breve{\Vector{v}}}(t)
& =
\breve{\Matrix{C}}(t)\<\Vector{s}(t), \\
\dot{\breve{\Vector{r}}}(t)
& =
\breve{\Vector{v}}(t),
\end{align*}
with $\breve{\Matrix{C}}(0) = \Identity_{3}$ and $\breve{\Vector{v}}(0) = \breve{\Vector{r}}(0) = \Zero$, so that these quantities depend only on the IMU inputs and the biases.
Integrating the kinematics forward from $\Matrix{X}_0$ gives
\begin{equation}
\label{eqn:SGal3_trajectory_integrated}
\begin{aligned}
\Matrix{C}(t)
& =
\Matrix{C}_0\<\breve{\Matrix{C}}(t), \\
\Vector{v}(t)
& =
\Matrix{C}_0\<\breve{\Vector{v}}(t) + \Vector{v}_0 + t\<\Vector{g}, \\
\Vector{r}(t)
& =
\Matrix{C}_0\<\breve{\Vector{r}}(t) + t\<\Vector{v}_0 + \Vector{r}_0 + \tfrac{1}{2}\<t^2\<\Vector{g}.
\end{aligned}
\end{equation}

\subsection{Group Form of the Trajectory}
\label{subsec:group_trajectory}

The componentwise expressions above admit a compact form on $\LieGroupSGal{3}$.
Working directly on the group lets us define the trajectory and measurement models cleanly, and express the symmetry of \Cref{sec:uninformative_trajectories} exactly.
The three contributions to the trajectory become group factors: gravity acts on the left, in the inertial frame, while the integrated input history acts on the right, in the body frame, and the unknown initial condition sits between them.
We collect the propagated quantities into the state transition matrix
\begin{equation*}
\Matrix{\Phi}(t)
\Defined
\bbm
\breve{\Matrix{C}}(t) & \breve{\Vector{v}}(t) & \breve{\Vector{r}}(t) \\
\Zero & 1 & t \\
\Zero & 0 & 1
\ebm
\in \LieGroupSGal{3},
\quad
\Matrix{\Phi}(0) = \Identity_5.
\end{equation*}

The gravity factor is built from two Lie algebra directions.
Because time is part of the Galilean group element, a time shift is generated by
\begin{equation}
\label{eqn:pure_time_generator}
\Vector{\xi}_{\rms{d}}
\Defined
\bbm
\Zero_{1 \times 9} & 1
\ebm^\T
\in \Real^{10},
\end{equation}
which, with our ordering of the Lie algebra coordinates, has only a temporal component.
Write $\Matrix{T}(\beta) \Defined \Matexp{\beta\<\Vector{\xi}_{\rms{d}}^{\wedge}}$ for the corresponding pure time translation, which shifts the group time coordinate by $\beta$ and leaves position, velocity, and orientation unchanged.
Gravity is generated by
\begin{equation}
\label{eqn:sgal3_gravity_generator}
\Vector{\xi}_{\rms{g}}
\Defined
\bbm
\Zero_{1 \times 3} & \Vector{g}^\T & \Zero_{1 \times 3} & 1
\ebm^\T
\in \Real^{10},
\end{equation}
with $\Vector{\nu} = \Vector{g}$ and $\iota = 1$.
The associated one-parameter subgroup is
\begin{equation}
\label{eqn:SGal3_free_fall_subgroup}
\Matrix{W}(\beta)
\Defined
\Matexp{\beta\<\Vector{\xi}_{\rms{g}}^{\wedge}}
=
\bbm
\Identity_{3} & \beta\<\Vector{g} & \tfrac{1}{2}\<\beta^2\Vector{g} \\
\Zero & 1 & \beta \\
\Zero & 0 & 1
\ebm,
\end{equation}
where the closed form follows from \Cref{eqn:SGal3_exp_closed} with $\Vector{\phi} = \Zero$.
This subgroup describes free fall: over the interval $\beta$, the velocity changes by $\beta\<\Vector{g}$ and the position by $\tfrac{1}{2}\<\beta^2\Vector{g}$, while time advances by $\beta$.
Since $\Matrix{\Phi}(t)$ already advances the time coordinate, we remove the time translation from $\Matrix{W}(t)$, leaving the contribution of gravity over $[0,\,t]$,
\begin{equation}
\label{eqn:SGal3_gravity_factor}
\Matrix{G}(t)
\Defined
\Matrix{W}(t)\<\Matrix{T}(-t)
=
\bbm
\Identity_{3} & t\<\Vector{g} & -\tfrac{1}{2}\<t^2\Vector{g} \\
\Zero & 1 & 0 \\
\Zero & 0 & 1
\ebm.
\end{equation}
The trajectory then factors as
\begin{equation}
\label{eqn:SGal3_trajectory_factored}
\Matrix{X}(t)
=
\Matrix{G}(t)\<\Matrix{X}_0\<\Matrix{\Phi}(t),
\end{equation}
which reproduces \Cref{eqn:SGal3_trajectory_integrated} exactly.\footnote{A similar gravity factor appears in equivariant IMU preintegration on the special Galilean group~\cite{2025_Delama_Equivariant}.}
The negative sign of the position term in \Cref{eqn:SGal3_gravity_factor} is accounted for by the group product: since $\Matrix{G}(t)$ and $\Matrix{X}_0$ have zero time coordinate, the boost--time coupling contributes $t^2\Vector{g}$, which combines with $-\tfrac{1}{2}\<t^2\Vector{g}$ to give the expected $\tfrac{1}{2}\<t^2\Vector{g}$.

\subsection{Delayed Aiding Measurements}
\label{subsec:delayed_measurements}

We consider aiding measurements collected at discrete times $t_k$, for $k = 1, \dots, n$.
A measurement received at time $t_k$ corresponds to the navigation state at the earlier time $t_k - \tau$.
For compactness, we write $t_{\rms{d},k} \Defined t_k - \tau$ for the delayed measurement time. %

We assume that the aiding sensor measures the full delayed navigation state, that is, the position, velocity, and orientation at time $t_{\rms{d},k}$, reported at the output time $t_k$,
\begin{equation}
\label{eqn:SGal3_delayed_measurement}
\Matrix{Y}_k
\Defined
\Matrix{T}(\tau)\<
\Matrix{X}(t_{\rms{d},k})
=
\Matrix{T}(\tau)\<
\Matrix{G}(t_{\rms{d},k})\<
\Matrix{X}_0\<
\Matrix{\Phi}(t_{\rms{d},k}),
\end{equation}
where the leading factor $\Matrix{T}(\tau)$, defined in \Cref{subsec:group_trajectory}, advances the group time coordinate from $t_{\rms{d},k}$ to $t_k$.
Writing the navigation components explicitly,
\begin{align*}
\Matrix{C}(t_{\rms{d},k})
& =
\Matrix{C}_0\<\breve{\Matrix{C}}(t_{\rms{d},k}), \\
\Vector{v}(t_{\rms{d},k})
& =
\Matrix{C}_0\<\breve{\Vector{v}}(t_{\rms{d},k}) + \Vector{v}_0 + t_{\rms{d},k}\<\Vector{g}, \\
\Vector{r}(t_{\rms{d},k})
& =
\Matrix{C}_0\<\breve{\Vector{r}}(t_{\rms{d},k}) + t_{\rms{d},k}\<\Vector{v}_0 + \Vector{r}_0 + \tfrac{1}{2}\<t_{\rms{d},k}^2\<\Vector{g}.
\end{align*}

\begin{figure*}[t]
\centering
\begin{minipage}{0.95\textwidth}
\begin{equation}
\label{eqn:single_full_jacobian}
\Matrix{H}_k
=
\bbm
\Identity_{3}
&
t_{\rms{d},k}\Identity_{3}
&
-\Matrix{C}_0\<\breve{\Vector{r}}(t_{\rms{d},k})^{\wedge}
&
-\Vector{v}(t_{\rms{d},k})
&
\Matrix{C}_0\<\Matrix{\Psi}^{(r)}_{\omega}(t_{\rms{d},k})
&
\Matrix{C}_0\<\Matrix{\Psi}^{(r)}_{\rms{a}}(t_{\rms{d},k})
\\[0.5mm]
\Zero
&
\Identity_{3}
&
-\Matrix{C}_0\<\breve{\Vector{v}}(t_{\rms{d},k})^{\wedge}
&
-\Vector{a}(t_{\rms{d},k})
&
\Matrix{C}_0\<\Matrix{\Psi}^{(v)}_{\omega}(t_{\rms{d},k})
&
\Matrix{C}_0\<\Matrix{\Psi}^{(v)}_{\rms{a}}(t_{\rms{d},k})
\\[0.5mm]
\Zero
&
\Zero
&
\breve{\Matrix{C}}(t_{\rms{d},k})^\T
&
-\Vector{\omega}(t_{\rms{d},k})
&
\Matrix{\Psi}^{(\phi)}_{\omega}(t_{\rms{d},k})
&
\Zero
\ebm.
\end{equation}
\vspace{0.1\baselineskip}
\end{minipage}
\noindent\rule{\textwidth}{0.25pt}
\vspace{-2.3\baselineskip}
\end{figure*}

\subsection{Measurement Jacobian and Nullspace Structure}
\label{subsec:jacobian_nullspace}

Next, we examine local identifiability by linearizing the delayed measurement model.
We use additive perturbations of the inertial-frame position and velocity, together with a right-multiplicative orientation perturbation $\Matrix{C}_0 \mapsto \Matrix{C}_0\Matexp{\delta\boldsymbol{\phi}_0^{\wedge}}$, and collect the parameter perturbations into
\begin{equation*}
\delta\SmallStateVector
\Defined
\bbm
\delta\Vector{r}_0^\T &
\delta\Vector{v}_0^\T &
\delta\boldsymbol{\phi}_0^\T &
\delta\tau &
\delta\<\Vector{b}_{\omega}^\T &
\delta\<\Vector{b}_{\rms{a}}^\T
\ebm^\T
\in \Real^{16}
\end{equation*}
and the output perturbation into
\begin{equation*}
\delta\Vector{y}_k
\Defined
\bbm
\delta\Vector{r}_k^\T &
\delta\Vector{v}_k^\T &
\delta\boldsymbol{\vartheta}_k^\T
\ebm^\T
\in \Real^9,
\end{equation*}
where $\bar{\Matrix{Y}}_k$ denotes the measurement predicted by the perturbed parameters, and
\begin{equation*}
\delta\Vector{r}_k \Defined \bar{\Vector{r}}_k - \Vector{r}_k,
\quad
\delta\Vector{v}_k \Defined \bar{\Vector{v}}_k - \Vector{v}_k,
\quad
\delta\boldsymbol{\vartheta}_k \Defined \Matlog{\Matrix{C}_k^\T \bar{\Matrix{C}}_k}^{\vee},
\end{equation*}
with $\Matrix{C}_k$, $\Vector{v}_k$, and $\Vector{r}_k$ the navigation components of $\Matrix{Y}_k$.
The bias perturbations are included for completeness.
Their role in the symmetry-based analysis is discussed separately in \Cref{subsec:biases}.
The linearized measurement equation is
\begin{equation*}
\delta\Vector{y}_k \approx \Matrix{H}_k\<\delta\SmallStateVector,
\end{equation*}
where $\Matrix{H}_k \in \Real^{9 \times 16}$ is the measurement Jacobian, obtained by linearizing the delayed measurement model \Cref{eqn:SGal3_delayed_measurement} at the operating point.
The bias sensitivity matrices are
\begin{align*}
\Matrix{\Psi}^{(r)}_{\omega}(t)
& \Defined
\frac{\partial\<\breve{\Vector{r}}(t)}{\partial\<\Vector{b}_{\omega}},
\;\;
\Matrix{\Psi}^{(v)}_{\omega}(t)
\Defined
\frac{\partial\<\breve{\Vector{v}}(t)}{\partial\<\Vector{b}_{\omega}},
\;\;
\Matrix{\Psi}^{(\phi)}_{\omega}(t)
\Defined
\frac{\partial\<\delta\boldsymbol{\phi}_{\omega}(t)}{\partial\<\Vector{b}_{\omega}}, \\[1mm]
\Matrix{\Psi}^{(r)}_{\rms{a}}(t)
& \Defined
\frac{\partial\<\breve{\Vector{r}}(t)}{\partial\<\Vector{b}_{\rms{a}}},
\;\;
\Matrix{\Psi}^{(v)}_{\rms{a}}(t)
\Defined
\frac{\partial\<\breve{\Vector{v}}(t)}{\partial\<\Vector{b}_{\rms{a}}},
\end{align*}
where $\breve{\Matrix{C}}_{\omega}(t)$ is the orientation propagated with the perturbed gyroscope bias and $\delta\boldsymbol{\phi}_{\omega}(t) \Defined \smash{\ln\bigl(\breve{\Matrix{C}}(t)^\T\breve{\Matrix{C}}_{\omega}(t)\bigr)^{\vee}}$ is the right-local orientation perturbation.
These matrices depend on the input history and are not available in closed form in general.
The delay column contains the negated time derivatives $-\Vector{v}(t_{\rms{d},k})$, $-\Vector{a}(t_{\rms{d},k})$, and $-\Vector{\omega}(t_{\rms{d},k})$, where $\Vector{v}$ is the inertial-frame velocity, $\Vector{a}(t) \Defined \dot{\Vector{v}}(t) = \Matrix{C}(t)\<\Vector{s}(t) + \Vector{g}$ is the coordinate acceleration, and $\Vector{\omega}$ is the body-frame angular rate.
The full Jacobian is given in \Cref{eqn:single_full_jacobian} and derived in~\cite{2026_Kelly_Identifiability}.

For a single delayed full navigation-state measurement, the Jacobian $\Matrix{H}_k$ has at most nine independent rows, so it does not constrain all $16$ unknowns.
To isolate the delay--initial-condition ambiguity, we set $\delta\<\Vector{b}_{\omega} = \delta\<\Vector{b}_{\rms{a}} = \Vector{0}$ and examine the nullspace.
The biases are locally weakly observable along the trajectory~\cite{2011_Kelly_Visual} and are not part of the delay--initial-condition symmetry (see \Cref{subsec:biases}).
Defining
\begin{equation*}
\Vector{\gamma}
\Defined
\Matrix{C}_0\<\breve{\Vector{v}}(t_{\rms{d},k})^{\wedge}
\breve{\Matrix{C}}(t_{\rms{d},k})\<\Vector{\omega}(t_{\rms{d},k})
+
\Vector{a}(t_{\rms{d},k}),
\end{equation*}
the reduced Jacobian has a nullspace that contains, for any input history, the direction $\Vector{d}$, so that
\begin{equation}
\label{eqn:single_measurement_nullspace}
\Ker{\Matrix{H}_k}
\supseteq
\Span{\Vector{d}},
\end{equation}
with
\begin{equation*}
\Vector{d}
=
\bbm
\Vector{v}(t_{\rms{d},k})
-
t_{\rms{d},k}\<\Vector{\gamma}
+
\Matrix{C}_0\<\breve{\Vector{r}}(t_{\rms{d},k})^{\wedge}
\breve{\Matrix{C}}(t_{\rms{d},k})\<\Vector{\omega}(t_{\rms{d},k}) \\
\Vector{\gamma} \\
\breve{\Matrix{C}}(t_{\rms{d},k})\<\Vector{\omega}(t_{\rms{d},k}) \\
1 \\
\Zero_{3 \times 1} \\
\Zero_{3 \times 1}
\ebm.
\end{equation*}
This direction corresponds to a unit perturbation in the delay together with matching perturbations in the initial condition, leaving the measurement unchanged to first order.

Stacking measurements over time adds rows to the Jacobian.
Two delayed full-state measurements give an $18 \times 16$ stacked Jacobian that can have full column rank for generic input histories; hence, two measurements can be sufficient for local identifiability.
For pose-only (position and orientation) measurements, each measurement contributes six rows rather than nine, so two measurements are insufficient, while three can be sufficient generically~\cite{2023_Wise_Spatiotemporal}.

\section{Uninformative Trajectories}
\label{sec:uninformative_trajectories}

The main result of the paper is that there exists a family of trajectories for which the delay and initial condition cannot be uniquely recovered, regardless of the number or type of aiding measurements.
The key insight is geometric: informativeness is determined by the \emph{shape} of the trajectory, as encoded by the state transition matrix $\Matrix{\Phi}(t)$, since the gravity factor $\Matrix{G}(t)$ is common to every parameter pair.

\subsection{Uninformativeness}
\label{subsec:uninformativeness}

Any trajectory generated by a constant Lie algebra element is uninformative.
Along such a trajectory, a change in the delay can be compensated exactly by a shift of the initial condition along the trajectory.

\begin{theorem}[Constant-generator trajectories are uninformative]
\label{thm:constant_generator_uninformative}
Suppose the bias-corrected body-frame angular velocity and specific force are constant:
\begin{equation*}
\Vector{\omega}(t) = \Vector{\omega}_0,
\quad
\Vector{s}(t) = \Vector{s}_0,
\end{equation*}
with $\Vector{\omega}_0, \Vector{s}_0 \in \Real^3$.
Then the $\LieGroupSGal{3}$ state transition is generated by a fixed Lie algebra element,
\begin{equation}
\label{eqn:one_param_subgroup}
\Matrix{\Phi}(t)
=
\Matexp{t\<\Vector{\xi}_{\rms{c}}^{\wedge}},
\quad
\Vector{\xi}_{\rms{c}}^{\wedge}
=
\bbm
\Vector{\omega}_0^{\wedge} & \Vector{s}_0 & \Zero \\
\Zero & 0 & 1 \\
\Zero & 0 & 0
\ebm,
\end{equation}
where $\Matexp{t\<\Vector{\xi}_{\rms{c}}^{\wedge}}$ is defined for all $t \in \Real$, so that $\Matrix{\Phi}$ extends to negative arguments.
Let $\epsilon > 0$ be small enough that $\tau + \alpha \in (0, \tau_{\rms{u}})$ for all $\alpha \in (-\epsilon,\epsilon)$.
Define
\begin{equation}
\label{eqn:symmetry_map}
\bar{\tau}
\Defined
\tau + \alpha,
\quad
\bar{\Matrix{X}}_0
\Defined
\Matrix{T}(-\alpha)\<
\Matrix{G}(\alpha)\<
\Matrix{X}_0\<
\Matrix{\Phi}(\alpha).
\end{equation}
Then the transformation
\begin{equation*}
\mathcal{S}_{\alpha}:
\bigl(\Matrix{X}_0, \tau\bigr)
\mapsto
\bigl(\bar{\Matrix{X}}_0, \bar{\tau}\bigr)
\end{equation*}
with all other parameters left unchanged, is a continuous symmetry of the delayed measurement model.
Therefore, the delay--initial-condition pair is not jointly locally identifiable.
\end{theorem}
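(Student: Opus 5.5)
The proof has three parts. First we show that $\Matrix{\Phi}$ is a one-parameter subgroup. Next we show that every delayed measurement $\Matrix{Y}_k$ is unchanged by $\mathcal{S}_\alpha$. Finally we show that the orbit is nontrivial and apply \Cref{thm:symmetry_unidentifiable_continuous}.

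\textbf{Step 1: the transition is a one-parameter subgroup.} For constant inputs, differentiate $\Matrix{\Phi}(t)$ blockwise. This gives $\dot{\Matrix{\Phi}}(t) = \Matrix{\Phi}(t)\<\Vector{\xi}_{\rms{c}}^{\wedge}$ with $\Matrix{\Phi}(0)=\Identity_5$:
\begin{itemize}
\item the top-left block gives $\dot{\breve{\Matrix{C}}} = \breve{\Matrix{C}}\<\Vector{\omega}_0^{\wedge}$;
\item the velocity column gives $\breve{\Matrix{C}}\<\Vector{s}_0$;
\item the position column gives $\breve{\Vector{v}}$;
\item the time entry gives $1$.
\end{itemize}
By uniqueness of solutions to linear matrix ODEs, $\Matrix{\Phi}(t)=\Matexp{t\<\Vector{\xi}_{\rms{c}}^{\wedge}}$. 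This extends to all $t\in\Real$ and satisfies $\Matrix{\Phi}(a)\Matrix{\Phi}(b)=\Matrix{\Phi}(a+b)$.

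\textbf{Step 2: group identities and well-definedness.} We need two identities.
\begin{itemize}
\item $\Matrix{G}(a)\Matrix{G}(b)=\Matrix{G}(a+b)$. Multiplying the explicit matrices in \Cref{eqn:SGal3_gravity_factor} gives position term $-\tfrac12 a^2\Vector{g} - \tfrac12 b^2\Vector{g}$. Because the time entries are zero, no cross term appears. The sum therefore falls short of $-\tfrac12(a+b)^2\Vector{g}$ by $ab\,\Vector{g}$, so this identity fails as stated.
\item The correct identity follows from $\Matrix{G}(t)=\Matrix{W}(t)\Matrix{T}(-t)$, with $\Matrix{W}$ and $\Matrix{T}$ one-parameter subgroups. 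It reads
\begin{equation*}
\Matrix{G}(a+b) = \Matrix{W}(a)\Matrix{W}(b)\Matrix{T}(-b)\Matrix{T}(-a) = \Matrix{W}(a)\<\Matrix{G}(b)\<\Matrix{T}(-a).
\end{equation*}
\end{itemize}
We also check that $\bar{\Matrix{X}}_0$ is isochronous. The time coordinate of $\Matrix{G}(\alpha)\Matrix{X}_0\Matrix{\Phi}(\alpha)$ is $\alpha$, and the factor $\Matrix{T}(-\alpha)$ removes it. So $\bar{\Matrix{X}}_0$ is an admissible initial condition, with $\bar{\Matrix{C}}_0=\Matrix{C}_0\breve{\Matrix{C}}(\alpha)$, and $\bar{\tau}\in(0,\tau_{\rms{u}})$ by the choice of $\epsilon$.

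\textbf{Step 3: invariance of the measurements.} Fix $t_k$ and write $s=t_k-\tau-\alpha$ for the perturbed delayed time. The perturbed measurement is
\begin{equation*}
\bar{\Matrix{Y}}_k=\Matrix{T}(\tau+\alpha)\<\Matrix{G}(s)\<\Matrix{T}(-\alpha)\<\Matrix{G}(\alpha)\<\Matrix{X}_0\<\Matrix{\Phi}(\alpha)\<\Matrix{\Phi}(s).
\end{equation*}
On the right, $\Matrix{\Phi}(\alpha)\Matrix{\Phi}(s)=\Matrix{\Phi}(t_k-\tau)$ by Step 1. On the left, we must show
\begin{equation*}
\Matrix{T}(\alpha)\<\Matrix{G}(s)\<\Matrix{T}(-\alpha)\<\Matrix{G}(\alpha)=\Matrix{G}(s+\alpha).
\end{equation*}
Given this, $\bar{\Matrix{Y}}_k$ reduces to $\Matrix{Y}_k$ in \Cref{eqn:SGal3_delayed_measurement}.

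To prove the left identity, use that conjugation by $\Matrix{T}(\alpha)$ acts on a zero-time element with boost $\Vector{b}$ and position $\Vector{p}$ by sending $\Vector{p}\mapsto\Vector{p}+\alpha\Vector{b}$. Then multiply explicitly: the product equals $\Matrix{G}(s+\alpha)$ exactly when $-\tfrac12 s^2 + \alpha s - \tfrac12\alpha^2 = -\tfrac12(s+\alpha)^2$.

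This sign bookkeeping is the main obstacle. As written, the left side gives $-\tfrac12(s-\alpha)^2$, which does not match, so the ordering or sign convention of $\Matrix{T}(\pm\alpha)$ must be resolved. I would first redo the computation in physical components, since that is the most reliable check: with $\bar{\Matrix{X}}_0 = \Matrix{X}(\alpha)$, the state reached along the same trajectory at time $\alpha$ with the clock reset to zero, one has $\bar{\Matrix{X}}(s)=\Matrix{X}(s+\alpha)$. This holds because constant inputs make the dynamics time-invariant and gravity is constant. The perturbed measurement then reports $\Matrix{X}(t_k-\tau)$ with time stamp $t_k$, which is $\Matrix{Y}_k$. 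I would then match this to the group expression, using the $\Matrix{W}$/$\Matrix{T}$ form of $\Matrix{G}$ from Step 2.

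\textbf{Step 4: nontriviality and conclusion.} Smoothness in $\alpha$ and $\mathcal{S}_0=\mathrm{id}$ are immediate from the definitions. For $\alpha\neq0$ we have $\bar{\tau}\neq\tau$, so $\mathcal{S}_\alpha(\StateVector)\neq\StateVector$. Step 3 gives indistinguishability at every $t_k$, or on the whole interval in the continuous case. Applying \Cref{thm:symmetry_unidentifiable_continuous}, in either its continuous or discrete version, shows that the delay--initial-condition pair is not locally identifiable.
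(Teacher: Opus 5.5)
\textbf{Gap in Step 3.} Step 3 has a genuine gap. It comes from a sign error in your conjugation formula, not from any defect in the statement.

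Conjugation by $\Matrix{T}(\alpha)$ sends a zero-time element with boost $\Vector{b}$ and position $\Vector{p}$ to one with position $\Vector{p}-\alpha\Vector{b}$, not $\Vector{p}+\alpha\Vector{b}$. Left-multiplying by $\Matrix{T}(\alpha)$ only sets the time entry to $\alpha$. Right-multiplying by $\Matrix{T}(-\alpha)$ then adds $-\alpha\Vector{b}$ to the position column through the boost--time coupling. Hence $\Matrix{T}(\alpha)\Matrix{G}(s)\Matrix{T}(-\alpha)$ has boost $s\Vector{g}$ and position $-(\tfrac12 s^2+\alpha s)\Vector{g}$. Right-multiplying by $\Matrix{G}(\alpha)$, whose time entry is zero, gives boost $(s+\alpha)\Vector{g}$ and position $-\tfrac12(s+\alpha)^2\Vector{g}$.

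So $\Matrix{T}(\alpha)\Matrix{G}(s)\Matrix{T}(-\alpha)\Matrix{G}(\alpha)=\Matrix{G}(s+\alpha)$ holds exactly as written. This is the composition identity the paper's proof relies on, and nothing about ordering or sign conventions needs resolving. As submitted, however, your Step 3 asserts that this identity fails and never establishes $\bar{\Matrix{Y}}_k=\Matrix{Y}_k$, so the proof does not close.

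\textbf{Two ways to close it.} You already had both in hand.

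First, your Step 2 identity $\Matrix{G}(a+b)=\Matrix{W}(a)\Matrix{G}(b)\Matrix{T}(-a)$, combined with $\Matrix{W}(a)=\Matrix{G}(a)\Matrix{T}(a)$, gives $\Matrix{G}(a+b)=\Matrix{G}(a)\,\Matrix{T}(a)\Matrix{G}(b)\Matrix{T}(-a)$. The two factors commute, because rotation-free zero-time elements form an abelian subgroup (boosts add and positions add). This is precisely the paper's identity.

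Second, your physical-components argument is a complete proof on its own once stated precisely:
\begin{itemize}
\item $\bar{\Matrix{X}}_0=\Matrix{T}(-\alpha)\Matrix{X}(\alpha)$ by the trajectory factorization, extended backward for $\alpha<0$ using the constant inputs.
\item The kinematics are autonomous under constant inputs and constant gravity, so ODE uniqueness gives $\Matrix{T}(\alpha)\bar{\Matrix{X}}(s)=\Matrix{X}(s+\alpha)$. State it this way rather than as $\bar{\Matrix{X}}(s)=\Matrix{X}(s+\alpha)$, since the group time coordinates differ by $\alpha$.
\item Therefore $\bar{\Matrix{Y}}_k=\Matrix{T}(\tau)\Matrix{T}(\alpha)\bar{\Matrix{X}}(t_k-\tau-\alpha)=\Matrix{T}(\tau)\Matrix{X}(t_k-\tau)=\Matrix{Y}_k$. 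No further ``matching to the group expression'' is needed.
\end{itemize}
That route replaces the paper's closed-form group identity with an ODE-uniqueness argument. Either one completes Step 3, but you need to commit to one and remove the incorrect claim that the identity fails.
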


\begin{proof}
For the original parameters, the delayed full-state measurement at time $t$ is
\begin{equation*}
\Matrix{Y}(t)
=
\Matrix{T}(\tau)\<
\Matrix{G}(t_{\rms{d}})\<
\Matrix{X}_0\<
\Matrix{\Phi}(t_{\rms{d}}),
\end{equation*}
where $t_{\rms{d}} \Defined t - \tau$, and we write $\bar{t}_{\rms{d}} \Defined t - \bar{\tau} = t_{\rms{d}} - \alpha$ for the transformed delayed time.
For any $\beta$, direct computation yields the composition identity
\begin{equation}
\label{eqn:gravity_closure}
\Matrix{T}(\alpha)\<\Matrix{G}(\beta)\<\Matrix{T}(-\alpha)\<\Matrix{G}(\alpha)
=
\Matrix{G}(\alpha + \beta).
\end{equation}
For the transformed parameters in \Cref{eqn:symmetry_map},
\begin{align*}
\bar{\Matrix{Y}}(t)
& =
\Matrix{T}(\bar{\tau})\<
\Matrix{G}(\bar{t}_{\rms{d}})\<
\bar{\Matrix{X}}_0\<
\Matrix{\Phi}(\bar{t}_{\rms{d}}) \\
& =
\Matrix{T}(\tau)
\bigl[\Matrix{T}(\alpha)\<\Matrix{G}(\bar{t}_{\rms{d}})\<\Matrix{T}(-\alpha)\<\Matrix{G}(\alpha)\bigr]
\Matrix{X}_0\<
\Matrix{\Phi}(\alpha)\<
\Matrix{\Phi}(\bar{t}_{\rms{d}}) \\
& =
\Matrix{T}(\tau)\<
\Matrix{G}(t_{\rms{d}})\<
\Matrix{X}_0\<
\Matrix{\Phi}(t_{\rms{d}})
=
\Matrix{Y}(t),
\end{align*}
where the last step uses \Cref{eqn:gravity_closure} with $\beta = \bar{t}_{\rms{d}}$, the one-parameter subgroup property of $\Matrix{\Phi}(\cdot)$, and $\bar{t}_{\rms{d}} + \alpha = t_{\rms{d}}$.
Hence, the transformed parameters produce exactly the same delayed measurement history as the original parameters.

The transformed initial condition remains admissible: $\Matrix{G}(\alpha)$ is isochronous, and the time components of $\Matrix{\Phi}(\alpha)$ and $\Matrix{T}(-\alpha)$ cancel, so $\bar{\Matrix{X}}_0$ is isochronous whenever $\Matrix{X}_0$ is.
Since $\bar{\tau} \neq \tau$ for $\alpha \neq 0$, the result follows from \Cref{thm:symmetry_unidentifiable_continuous}.
\end{proof}

One application of \Cref{eqn:gravity_closure}, together with the one-parameter subgroup property of $\Matrix{T}(\cdot)$ and $\Matrix{\Phi}(\cdot)$, shows that the maps compose as $\mathcal{S}_{\alpha} \circ \mathcal{S}_{\beta} = \mathcal{S}_{\alpha+\beta}$, matching the group structure described in \Cref{subsec:symmetries}.

The next result gives a corresponding local necessity condition: a continuous delay-shift symmetry of the full delayed state history requires the generator to be constant over the delayed observation interval.
The only assumption on the family itself is that it traces a continuous curve of admissible delay--initial-condition pairs.

\begin{theorem}[Necessity for a continuous delay-shift symmetry]
\label{thm:symmetry_necessity}
With the IMU biases and all other parameters fixed, suppose the admissible pairs $\bigl(\bar{\Matrix{X}}_0, \tau + \alpha\bigr)$ depend continuously on $\alpha \in (-\epsilon, \epsilon)$, with $\bar{\Matrix{X}}_0 = \Matrix{X}_0$ at $\alpha = 0$, and leave the delayed state history unchanged.
Then the bias-corrected inputs are constant over the delayed observation interval, and
\begin{equation*}
\bar{\Matrix{X}}_0
=
\Matrix{T}(-\alpha)\<
\Matrix{G}(\alpha)\<
\Matrix{X}_0\<
\Matrix{\Phi}(t_{\rms{d}})\<
\Matrix{\Phi}(t_{\rms{d}} - \alpha)^{-1}
\end{equation*}
for any $t_{\rms{d}}$ in that interval.
\end{theorem}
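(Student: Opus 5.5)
Starting from the hypothesis, equality of the delayed state history means that for every output time $t$ in the observation interval,
\begin{equation*}
\Matrix{T}(\tau+\alpha)\<\Matrix{G}(t_{\rms{d}}-\alpha)\<\bar{\Matrix{X}}_0\<\Matrix{\Phi}(t_{\rms{d}}-\alpha)
=
\Matrix{T}(\tau)\<\Matrix{G}(t_{\rms{d}})\<\Matrix{X}_0\<\Matrix{\Phi}(t_{\rms{d}}),
\end{equation*}
with $t_{\rms{d}} = t-\tau$. I solve this pointwise for $\bar{\Matrix{X}}_0$. Left-multiplying by $\Matrix{G}(t_{\rms{d}}-\alpha)^{-1}\Matrix{T}(-\tau-\alpha)$, I rewrite $\Matrix{G}(t_{\rms{d}}-\alpha)^{-1}\Matrix{T}(-\alpha)\Matrix{G}(t_{\rms{d}})$ using the composition identity \Cref{eqn:gravity_closure} with $\beta = t_{\rms{d}}-\alpha$. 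That identity gives $\Matrix{T}(-\alpha)\Matrix{G}(t_{\rms{d}}) = \Matrix{G}(t_{\rms{d}}-\alpha)\Matrix{T}(-\alpha)\Matrix{G}(\alpha)$, which collapses the left factor to $\Matrix{T}(-\alpha)\Matrix{G}(\alpha)$. Right-multiplying by $\Matrix{\Phi}(t_{\rms{d}}-\alpha)^{-1}$ then yields exactly the displayed formula for $\bar{\Matrix{X}}_0$, valid at every $t_{\rms{d}}$ in the delayed observation interval. This is purely algebraic and needs no assumption on the inputs.

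The key step is that $\bar{\Matrix{X}}_0$ does not depend on $t_{\rms{d}}$. Hence, after cancelling the common invertible left factor, $\Matrix{\Phi}(t_{\rms{d}})\Matrix{\Phi}(t_{\rms{d}}-\alpha)^{-1} =: \Matrix{M}_\alpha$ is constant in $t_{\rms{d}}$ for each fixed $\alpha$. Equivalently, $\Matrix{\Phi}(t_{\rms{d}}) = \Matrix{M}_\alpha\Matrix{\Phi}(t_{\rms{d}}-\alpha)$ holds on the interval for all small $\alpha$. I will differentiate in $t_{\rms{d}}$ and use $\dot{\Matrix{\Phi}} = \Matrix{\Phi}\,\Vector{\xi}(t)^{\wedge}$, where $\Vector{\xi}(t)^{\wedge}$ has entries $\Vector{\omega}(t)^{\wedge}$, $\Vector{s}(t)$, and $\iota=1$ (matching the kinematics of $\breve{\Matrix{C}}$, $\breve{\Vector{v}}$, $\breve{\Vector{r}}$ and $t$). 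This gives $\Matrix{\Phi}(t_{\rms{d}})\Vector{\xi}(t_{\rms{d}})^{\wedge} = \Matrix{M}_\alpha\Matrix{\Phi}(t_{\rms{d}}-\alpha)\Vector{\xi}(t_{\rms{d}}-\alpha)^{\wedge}$. Substituting $\Matrix{M}_\alpha\Matrix{\Phi}(t_{\rms{d}}-\alpha) = \Matrix{\Phi}(t_{\rms{d}})$ and cancelling the invertible $\Matrix{\Phi}(t_{\rms{d}})$ gives $\Vector{\xi}(t_{\rms{d}}) = \Vector{\xi}(t_{\rms{d}}-\alpha)$. So the bias-corrected inputs $(\Vector{\omega},\Vector{s})$ are invariant under shifts by every sufficiently small $\alpha$, whenever both times lie in the interval.

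A function invariant under all small shifts on an interval is constant there, since any two points are joined by finitely many small steps. This proves constancy of the inputs on the delayed observation interval, at least on the part where $t_{\rms{d}}-\alpha$ also lies in the region where $\Matrix{\Phi}$ is defined. The main obstacle is regularity and domain bookkeeping. Differentiating requires the inputs to be continuous, or at least that the identity hold almost everywhere. If the inputs are merely measurable, I would avoid derivatives: $\Matrix{\Phi}(t_{\rms{d}})^{-1}\Matrix{\Phi}(t_{\rms{d}}+h)$ is the transition generated on $[t_{\rms{d}},t_{\rms{d}}+h]$, and the constancy of $\Matrix{M}_\alpha$ shows these increments are shift-invariant, which forces equal generators a.e. I also need the shifted times $t_{\rms{d}}-\alpha$ to remain at nonnegative times. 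This is guaranteed because $t_{\rms{s}}\ge\tau_{\rms{u}}$ and $\tau+\alpha<\tau_{\rms{u}}$. The continuity hypothesis in $\alpha$ is used only to ensure the relation holds for a whole interval of $\alpha$ around $0$, so arbitrarily small shifts are available.
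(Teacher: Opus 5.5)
Your proposal is correct and follows the paper's proof essentially step for step. You solve the invariance equation for $\bar{\Matrix{X}}_0$ via the composition identity \Cref{eqn:gravity_closure}, note that $\Matrix{\Phi}(t_{\rms{d}})\Matrix{\Phi}(t_{\rms{d}}-\alpha)^{-1}$ must then be independent of $t_{\rms{d}}$, differentiate to obtain $\Vector{\xi}(t_{\rms{d}}) = \Vector{\xi}(t_{\rms{d}}-\alpha)$, and conclude constancy from invariance under all small shifts. Your additional remarks go slightly beyond the paper, which leaves these points implicit: the regularity needed to differentiate, that the shifted times stay nonnegative, and that continuity in $\alpha$ adds nothing once the relation holds for every $\alpha \in (-\epsilon,\epsilon)$.
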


\begin{proof}
Invariance of the delayed state history requires, for every $t$ in the observation interval and all $\alpha$ sufficiently small,
\begin{equation*}
\Matrix{T}(\bar{\tau})\<
\Matrix{G}(\bar{t}_{\rms{d}})\<
\bar{\Matrix{X}}_0\<
\Matrix{\Phi}(\bar{t}_{\rms{d}})
=
\Matrix{T}(\tau)\<
\Matrix{G}(t_{\rms{d}})\<
\Matrix{X}_0\<
\Matrix{\Phi}(t_{\rms{d}}).
\end{equation*}
Every factor is invertible, so
\begin{equation*}
\bar{\Matrix{X}}_0
=
\Matrix{G}(\bar{t}_{\rms{d}})^{-1}\<
\Matrix{T}(-\alpha)\<
\Matrix{G}(t_{\rms{d}})\<
\Matrix{X}_0\<
\Matrix{\Phi}(t_{\rms{d}})\<
\Matrix{\Phi}(\bar{t}_{\rms{d}})^{-1}.
\end{equation*}
The identity \Cref{eqn:gravity_closure} rearranges to
$\Matrix{G}(\bar{t}_{\rms{d}})^{-1}\<\Matrix{T}(-\alpha)\<\Matrix{G}(t_{\rms{d}})
= \Matrix{T}(-\alpha)\<\Matrix{G}(\alpha)$, and hence
\begin{equation}
\label{eqn:necessity_condition}
\bar{\Matrix{X}}_0
=
\Matrix{T}(-\alpha)\<
\Matrix{G}(\alpha)\<
\Matrix{X}_0\<
\Matrix{\Phi}(t_{\rms{d}})\<
\Matrix{\Phi}(\bar{t}_{\rms{d}})^{-1}.
\end{equation}
The left-hand side of \Cref{eqn:necessity_condition} depends only on $\alpha$, as do the three leading factors on the right, so the product $\Matrix{\Phi}(t_{\rms{d}})\<\Matrix{\Phi}(\bar{t}_{\rms{d}})^{-1}$ is independent of $t_{\rms{d}}$ on the delayed observation interval.
With $\alpha$ fixed, the derivative of this product with respect to $t_{\rms{d}}$ is zero.
Using the kinematics $\dot{\Matrix{\Phi}}(t) = \Matrix{\Phi}(t)\<\Vector{\xi}(t)^{\wedge}$ together with $\tfrac{d}{dt}\Matrix{\Phi}(t)^{-1} = -\Vector{\xi}(t)^{\wedge}\<\Matrix{\Phi}(t)^{-1}$, this gives
\begin{equation*}
\Matrix{\Phi}(t_{\rms{d}})\<\Vector{\xi}(t_{\rms{d}})^{\wedge}\<\Matrix{\Phi}(\bar{t}_{\rms{d}})^{-1}
-
\Matrix{\Phi}(t_{\rms{d}})\<\Vector{\xi}(\bar{t}_{\rms{d}})^{\wedge}\<\Matrix{\Phi}(\bar{t}_{\rms{d}})^{-1}
=
\Zero.
\end{equation*}
The outer factors are invertible, so $\Vector{\xi}(t_{\rms{d}}) = \Vector{\xi}(\bar{t}_{\rms{d}})$ for every $t_{\rms{d}}$ in the interval and every $\alpha \in (-\epsilon, \epsilon)$.
Invariance under all sufficiently small shifts forces $\Vector{\xi}$ to be constant on the interval.
Writing $\Vector{\xi}(t) = \Vector{\xi}_{\rms{c}}$ for the constant value, the blocks of $\Vector{\xi}_{\rms{c}}$ in \Cref{eqn:one_param_subgroup} give $\Vector{\omega}(t) = \Vector{\omega}_0$ and $\Vector{s}(t) = \Vector{s}_0$, so the bias-corrected inputs are constant over the delayed observation interval.
Since $\Matrix{\Phi}(t_{\rms{d}})\<\Matrix{\Phi}(\bar{t}_{\rms{d}})^{-1}$ is independent of $t_{\rms{d}}$, \Cref{eqn:necessity_condition} holds with the product evaluated at any $t_{\rms{d}}$ in the delayed observation interval, which is the stated form of $\bar{\Matrix{X}}_0$.
\end{proof}

If the inputs are constant from the initial time $t = 0$, then $\Matrix{\Phi}(t_{\rms{d}})\<\Matrix{\Phi}(t_{\rms{d}} - \alpha)^{-1} = \Matrix{\Phi}(\alpha)$ and the family reduces to the symmetry in \Cref{eqn:symmetry_map}.
\Cref{thm:symmetry_necessity} applies to the full delayed state history; sufficiency extends to partial aiding measurements, whereas necessity is established only for the full-state case.

\subsection{Symmetry Orbits}
\label{subsec:symmetry_orbits}

The symmetry in \Cref{thm:constant_generator_uninformative} defines a one-dimensional orbit in the space of delay--initial-condition pairs.
For a nominal pair $(\Matrix{X}_0, \tau)$, this orbit is
\begin{equation*}
\begin{aligned}
\Orb{
\vphantom{\begin{aligned} \\ \end{aligned}}
(\Matrix{X}_0, \tau)}
=
\Bigl\{
&
\bigl(
\Matrix{T}(-\alpha)\<
\Matrix{G}(\alpha)\<
\Matrix{X}_0\<
\Matrix{\Phi}(\alpha),
\;
\tau + \alpha
\bigr)
\\[-1mm]
& \hspace{26.3mm}
:
\alpha \in (-\epsilon, \epsilon)
\Bigr\}.
\end{aligned}
\end{equation*}
Every pair on the orbit produces the same delayed measurement history.
The maps form a local one-parameter group, $\mathcal{S}_{\alpha} \circ \mathcal{S}_{\beta} = \mathcal{S}_{\alpha+\beta}$, by \Cref{eqn:gravity_closure} and the subgroup property of $\Matrix{T}(\cdot)$ and $\Matrix{\Phi}(\cdot)$, so the orbit is a one-dimensional subset of the equivalence class of mutually indistinguishable parameter pairs.
For full-state measurements with the biases held fixed, \Cref{thm:symmetry_necessity} shows that any continuous family of indistinguishable pairs lies on this orbit.
For partial measurements, the orbit remains a subset of the equivalence class.
The transformed initial condition has the same navigation components as the state obtained by extending the constant-generator trajectory to time $\alpha$, while $\Matrix{T}(-\alpha)$ resets the group time coordinate to zero (see \Cref{fig:trajectory_teaser}).

Differentiating the transformed initial condition of \Cref{eqn:symmetry_map} at $\alpha = 0$ gives the orbit tangent.
Writing the navigation state components as
$\bar{\Vector{r}}_0(\alpha)$, $\bar{\Vector{v}}_0(\alpha)$, and
$\bar{\Matrix{C}}_0(\alpha)$, and using the right-local orientation coordinate
\begin{equation*}
\delta\boldsymbol{\phi}_0(\alpha)
\Defined
\Matlog{
\Matrix{C}_0^\T\<\bar{\Matrix{C}}_0(\alpha)
}^{\vee},
\end{equation*}
the tangent is
\begin{equation}
\label{eqn:orbit_tangent}
\left.\frac{d}{d\alpha}
\bbm
\bar{\Vector{r}}_0(\alpha) \\
\bar{\Vector{v}}_0(\alpha) \\
\delta\boldsymbol{\phi}_0(\alpha) \\
\tau+\alpha
\ebm\right|_{\alpha=0}
=
\bbm
\Vector{v}_0 \\
\Matrix{C}_0\<\Vector{s}_0 + \Vector{g} \\
\Vector{\omega}_0 \\
1
\ebm.
\end{equation}
The velocity row is the coordinate acceleration $\smash{\Vector{a}(0)} = \smash{\Matrix{C}_0\<\Vector{s}_0 + \Vector{g}}$, so the tangent is the time derivative of the navigation state and delay along the trajectory, as expected for a time-shift symmetry.
Comparing with \Cref{eqn:single_measurement_nullspace}, this vector agrees with the Jacobian null direction at every measurement time when the bias perturbations are set to zero.
The Jacobian nullspace is therefore the local, first-order `signature' of the same symmetry, and the stacked Jacobian is rank deficient because the same direction lies in the kernel of every block.

\begin{figure*}
\centering
\includegraphics[width=0.323\textwidth,trim={80px, 200px, 80px, 200px},clip]%
{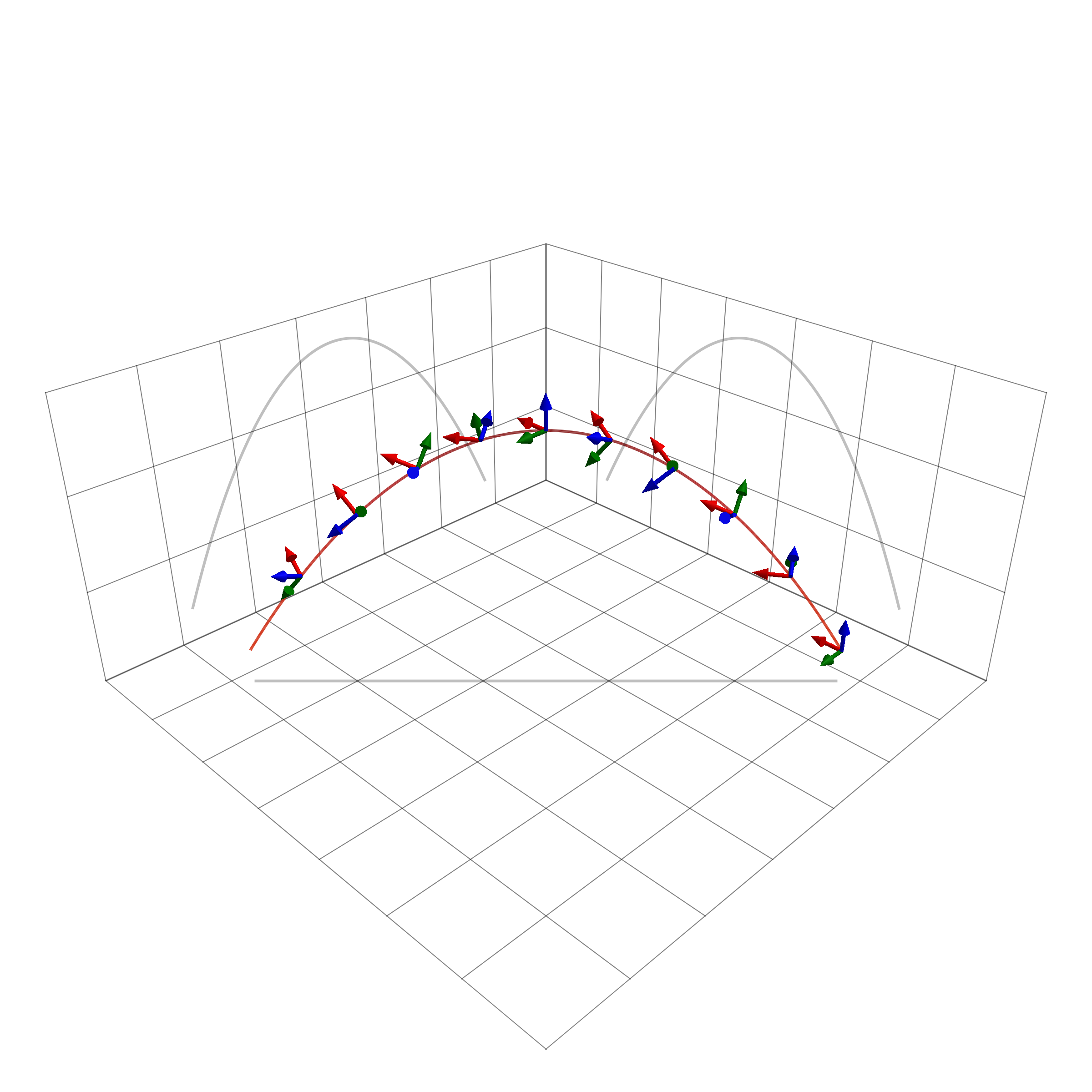}
\hspace{0.5mm}
\includegraphics[width=0.323\textwidth,trim={80px, 200px, 80px, 200px},clip]%
{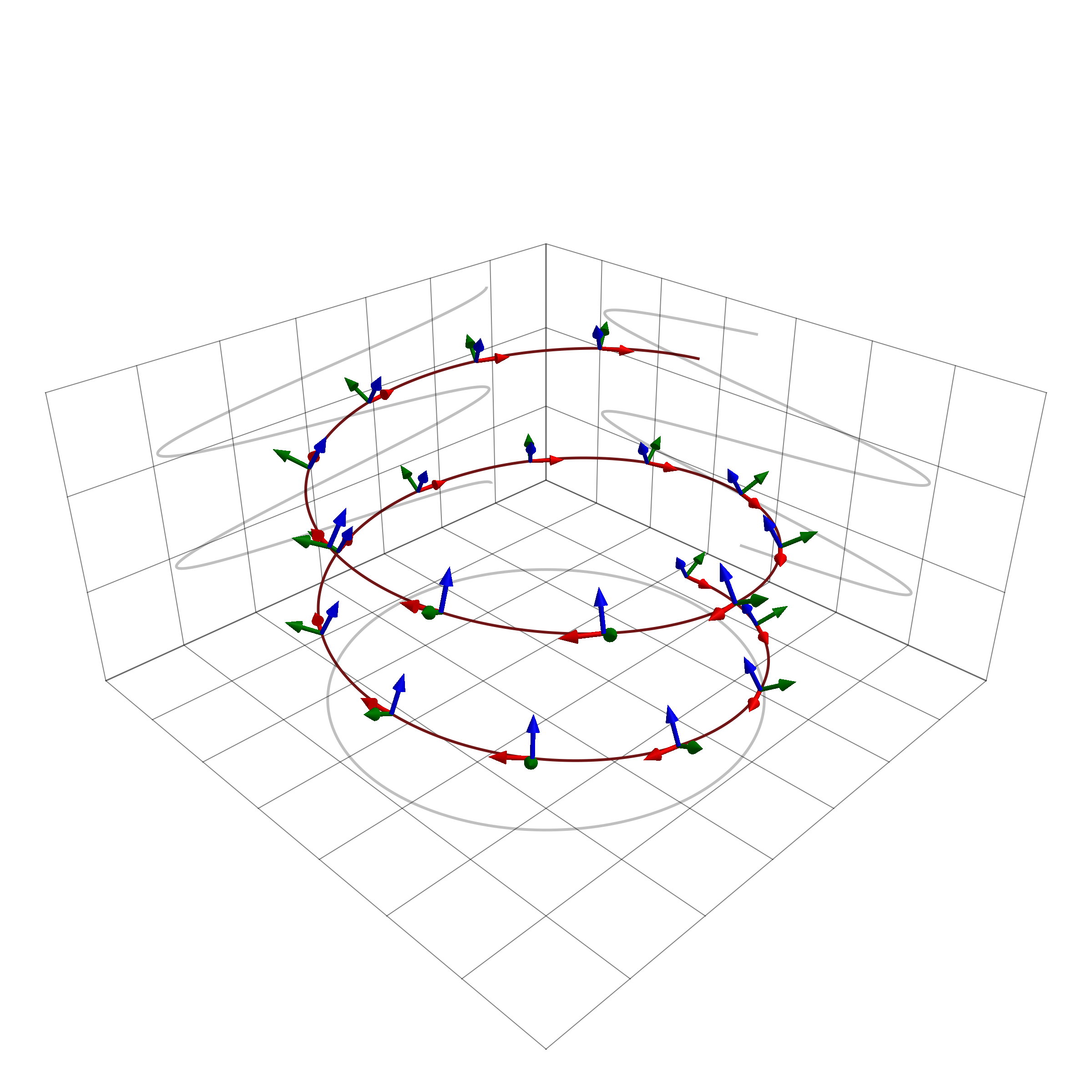}
\hspace{0.5mm}
\includegraphics[width=0.323\textwidth,trim={80px, 200px, 80px, 200px},clip]%
{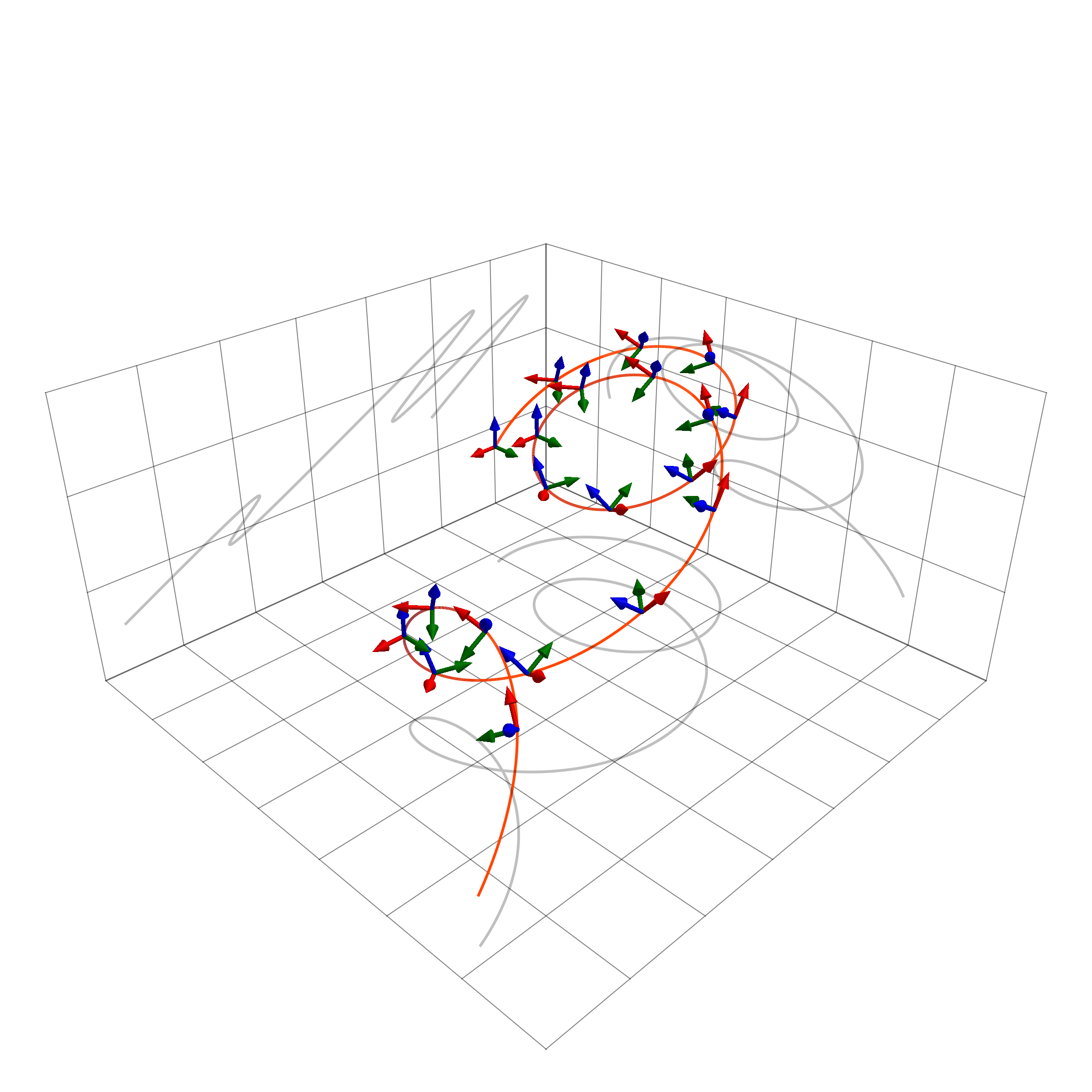}%
\vspace*{-1mm}
\caption{Examples of subfamilies of uninformative trajectories generated by constant body-frame inputs under gravity.
\emph{Left}: ballistic motion, produced by a zero specific force reading together with a constant spin.
\emph{Centre}: a climbing coordinated turn, with the bank angle fixed by the turn geometry, circular horizontal motion, and constant vertical velocity in the inertial frame.
\emph{Right}: general curvilinear motion, with the rotation axis tilted away from the gravity direction.
Body-frame axes are plotted along each trajectory, with the $x$, $y$, and $z$ axes in red, green, and blue.
All three trajectories share the same scale.
Each curve is shaded by the magnitude of the inertial-frame velocity, while the grey projections onto the coordinate planes show the trajectory shape.}
\label{fig:uninformative_examples}
\vspace*{-5mm}
\end{figure*}

\subsection{Gyroscope and Accelerometer Biases}
\label{subsec:biases}

The bias terms do not enter the symmetry in \Cref{thm:constant_generator_uninformative}.
Our prior work showed that the biases are locally weakly observable at each point along the trajectory~\cite{2011_Kelly_Visual}.
Perturbations to the biases change the bias-corrected body-frame input history and therefore the shape of the trajectory locally.

\subsection{Practical Examples}
\label{subsec:practical_examples}

From the form of \Cref{eqn:one_param_subgroup}, we can recognize several subfamilies of uninformative motion.
The subfamilies below are distinguished by the form of the generator $\Vector{\xi}_{\rms{c}}$:
\begin{itemize}
\setlength{\itemsep}{2pt}
\item \emph{Ballistic motion}, produced by zero specific force, $\Vector{s}_0 = \Zero$, and constant (possibly zero) vehicle-frame angular velocity.
The vehicle is in free fall, spinning at a constant rate, and the trajectory is a parabola determined by gravity and the initial velocity.
\item \emph{Constant coordinate acceleration}, produced by a nonzero specific force aligned with the spin axis, or with zero spin, so that $\Vector{\omega}_0^{\wedge}\Vector{s}_0 = \Zero$.
The inertial-frame acceleration is constant and the path is a parabolic arc.
\item \emph{Circular and helical motion}, produced by a specific force with a component perpendicular to the spin axis, with the axis parallel to gravity.
The motion is circular when the velocity is perpendicular to the spin axis, and helical otherwise.
A coordinated turn at constant bank angle is an example.
\item \emph{General curvilinear motion}, produced by any other constant pair $(\Vector{\omega}_0, \Vector{s}_0)$.
The trajectory shown in \Cref{fig:trajectory_teaser} is an example.
\end{itemize}

Representative trajectory examples are shown in \Cref{fig:uninformative_examples}.
Taken together, they show that the constant-generator condition is not a narrow special case, but a substantial geometric family that includes practically relevant motions.
The symmetry-based analysis also explains \emph{why} these trajectories are uninformative: the delayed measurement model admits a continuous symmetry, as established by \Cref{thm:constant_generator_uninformative}.

\section{Conclusion}
\label{sec:conclusion}

In this paper, we studied state and delay identifiability in aided inertial navigation with delayed measurements, using the special Galilean group $\LieGroupSGal{3}$.
Our formulation explicitly links the aided-navigation kinematics, delay, initial condition, and measurement history.
We described a continuous symmetry of the delayed measurement model under Galilean transformations and connected the symmetry to the nullspace of the measurement Jacobian.

Our main result is a geometric characterization of uninformative trajectories.
Identifiability depends not only on the number and type of measurements, but also on the shape of the vehicle trajectory itself.
When the bias-corrected inputs are constant, the state transition matrix is a one-parameter subgroup of $\LieGroupSGal{3}$.
A change in the delay can then be compensated by a shift of the initial condition along the trajectory, such that the delayed measurement history is unchanged.
The resulting symmetry orbit contains indistinguishable parameter tuples, so the initial condition and delay cannot be uniquely recovered.

There are several directions for future work.
One is to extend the analysis to relative measurements, including SLAM-type problems in which an additional spatial transformation appears in the measurement model~\cite{2011_Martinelli_State}.
Another is full spatiotemporal calibration, where unknown spatial extrinsic parameters and delays are estimated together~\cite{2019_Yang_Degenerate,2023_Yang_Online}.
The same ideas apply beyond inertial navigation, to other systems evolving on Lie groups with delayed sensor measurements.

\section*{Acknowledgements}

We thank the anonymous reviewers for their valuable comments and suggestions, which helped us to improve the quality and clarity of this manuscript.

\printbibliography

\end{document}